\newtheorem{theorem}{Theorem}
\title{
Accuracy and Privacy Evaluations of Collaborative Data Analysis
}
\author[1,*]{Akira Imakura}
\author[1]{Anna Bogdanova}
\author[1]{Takaya Yamazoe}
\author[1]{Kazumasa Omote}
\author[1]{Tetsuya Sakurai}
\affil[1]{University of Tsukuba, 1-1-1 Tennodai, Ibaraki, Tsukuba 305-8573, Japan}
\email{imakura@cs.tsukuba.ac.jp}
\begin{document}
\maketitle
\thispagestyle{titlepage}

\begin{abstract}
Distributed data analysis without revealing the individual data has recently attracted significant attention in several applications.
A collaborative data analysis through sharing dimensionality reduced representations of data has been proposed as a non-model sharing-type federated learning.
This paper analyzes the accuracy and privacy evaluations of this novel framework.
In the accuracy analysis, we provided sufficient conditions for the equivalence of the collaborative data analysis and the centralized analysis with dimensionality reduction.
In the privacy analysis, we proved that collaborative users' private datasets are protected with a double privacy layer against insider and external attacking scenarios.
\end{abstract}

\section{Introduction}
\subsection{Background}
Recently, the problem of real-life data availability for machine learning and data analysis applications came to the forefront of actual research challenges.
In particular, use-cases that pertain to sensitive personal information or corporate secrecy can benefit from the ability to process distributed data without revealing it to other parties.
\par
Various methods have been proposed over recent years, involving sharing a machine learning model that is collectively trained among several parties.
In the present work, we analyze an alternative method of distributed and privacy-preserving data analysis that does not require sharing the machine learning model.
The non-model-sharing approach has certain advantages over model-sharing methods: 
(a) maintaining the secrecy of particular model architecture; 
(b) protection from model poisoning attacks; 
(c) avoiding iterative communications necessary for machine learning training; 
(d) option to outsource data analysis to a third party or a data analysis competition. 
\par
The collaborative data analysis considered in this paper had been previously proposed in \cite{imakura2020data,imakura2020collaborative}.
However, the method's proper accuracy and privacy analysis were lacking.
In present work, we fill in this gap by providing conditions for equivalence of the data analysis in centralized and distributed settings, as well as conducting a thorough privacy analysis and disclosure risk evaluation of the collaborative data analysis.
\subsection{Main purposes and contributions}
The accuracy and privacy analyses are essential in practical use of the collaborative data analysis.
In this paper, we analyze the equivalence of the collaborative data analysis and the centralized analysis with dimensionality reduction.
We also analyze the privacy of the private dataset against insider and external attacking scenarios.
\par
The main contributions of this paper are summarized as follows:
\begin{itemize}
  \item We provided the sufficient condition for equivalence of the collaborative data analysis and the centralized analysis with dimensionality reduction.
  \item We proved that, in the collaborative data analysis, the private dataset is protected based on a double privacy layer against insider and external attacking scenarios.
  \item We demonstrated numerical evaluations for the accuracy and privacy analyses.
\end{itemize}
\subsection{Related Work}
The problem of deriving insights from data while maintaining the privacy of individual data records was first addressed in the context of Data Mining and Knowledge Discovery in Databases (KDD) \cite{agrawal2000privacy} and consequently formed a large body of literature known as Privacy-Preserving Data Mining (PPDM).
This field studies data-sanitizing operations, which can offer quantifiable privacy guarantees, while maintaining data utility for a variety of downstream analytical tasks, including supervised and unsupervised machine learning \cite{mendes2017privacy}. 
\par
As there are multiple definitions of what constitutes privacy and how it should be measured, distinct privacy guarantees and methods of privacy production became known in the literature as privacy models.
Most influential privacy models that emerged from PPDM are $k$-anonymity, proposed by Samarati and Sweeny \cite{samarati1998protecting}, and $\varepsilon$-differential privacy, introduced by Dwork \cite{dwork2008differential}.
$K$-anonymity protects users' data from linkage attacks, ensuring that released data has at least $k$ identical records.
$\varepsilon$-differential privacy, on the other hand, guarantees that the inclusion of any record in the dataset will not change the output of data mining operations by more than $\varepsilon$, thus preventing membership inference attacks.
Both privacy models are theoretically sound, deployed in practice, and legislatively recognized.
However, there are significant shortcomings that call for the development of alternative notions of privacy.
Thus, $k$-anonymity is proven to be NP-hard \cite{verykios2004state} and not attainable on high dimensional and sparse datasets \cite{narayanan2006break}.
Similarly, providing record-level $\varepsilon$-differential privacy is not suitable for modern deep learning applications \cite{zhao2019differential}, as the amount of perturbation required diminishes data utility. 
\par
With the advancement of highly parameterized machine learning models, the focus of data privacy research shifted towards designing model architectures and sanitizing model parameters to enable Privacy-Preserving Machine Learning (PPML).
One particularly successful approach had been Federated Learning proposed by McMahan et al. \cite{mcmahan2017communication}.
It is a machine learning framework that allows distributed training of deep learning models through the averaging of gradient descent steps taken on private datasets.
Analogous algorithms were introduced for numerous machine learning models and various distributed settings, forming what became known as Federated Learning Systems \cite{li2019federated}.
Although currently, PPML cannot provide a formal privacy guarantee, as PPDM does, it satisfies privacy requirements through the data minimization approach, by sharing only the information necessary to the particular analytical task \cite{kairouz2019advances}.
Additionally, PPML is often combined with encryption schemes to prevent inference from intermediary results of the computation.
\par
The collaborative data analysis \cite{imakura2020data,imakura2020collaborative}, the method considered in this paper, is positioned in between the two approaches to privacy-preserving data analysis.
It shares with PPDM the focus on data transformation and the release of sample-vise information, which can be further explored for hidden relations and patterns.
At the same time, it employs the information-minimization approach of PPML through the dimensionality reduction operation on the original data.
Moreover, the shared intermediate representations can be formed by hidden layers of deep neural networks, strongly relating our method to the PPML domain.
\par
Privacy-preserving properties of dimensionality reduction were previously explored in several papers.
Thus, Tai et el. \cite{tai2018exploring}, demonstrated that dimensionality reduction on average increases the value of $k$ in $k$-anonymity privacy model, although it does not perform it reliably.
Similarly, Nguyen and colleagues \cite{nguyen2020autogan} developed a $\varepsilon$-DR privacy framework of measuring the information loss of dimensionality reduction  operations analogous to $\varepsilon$-differential privacy.
Formal privacy guaranteed were demonstrated for particular methods of dimensionality reduction, such as non-metric multidimensional scaling (MDS) \cite{alotaibi2012non} and random projections \cite{liu2005random}.
Moreover, specialized methods of dimensionality reduction were developed to satisfy certain privacy models, for instance differential-private Principal Component Analysis, and differential-private Linear Discriminant Analysis \cite{jiang2013differential}.
Since Data Collaboration method assumes an arbitrary dimensionality reduction function applied at the user's side, in practical applications such methods can be chosen to satisfy necessary privacy standards.
\par
To the best of our knowledge, Data Collaboration is the only method so far offering collaborative data analysis through sharing dimensionality reduced representations of data and integration of such representations in a unified subspace.
In this work, for the first time we propose privacy guarantees as well as utility measures of the transformed collaborative representations of data.
\section{Collaborative data analysis}
\subsection{Distributed data analysis}
Let $m$ and $n$ denote the numbers of features and training data samples.
In addition, let $X = [{\bm x}_{1}, {\bm x}_{2}, \dots, {\bm x}_{n}]^{\rm T} \in \mathbb{R}^{n \times m}$ and $Y = [{\bm y}_1, {\bm y}_2, \dots, {\bm y}_n]^{\rm T} \in \mathbb{R}^{n \times \ell}$ be the training dataset and the corresponding ground truth.
The $n$ data samples are partitioned into $c$ parties as follows:
\begin{equation}
  X = \left[
    \begin{array}{c}
      X_{1} \\
      X_{2} \\
      \vdots  \\
      X_{c} 
    \end{array}
  \right], \quad
  Y = \left[
    \begin{array}{c}
      Y_{1} \\
      Y_{2} \\
      \vdots \\
      Y_{c} 
    \end{array}
  \right].
\end{equation}
Then, the $i$-th party has partial dataset and the corresponding ground truth,
\begin{equation*}
  X_{i} \in \mathbb{R}^{n_i \times m}, \quad Y_i \in \mathbb{R}^{n_i \times \ell}.
\end{equation*}
\par
A motivating example could be found in distributed medical data analysis.
An analysis only using the dataset in each medical institution, i.e., {\it individual analysis} may not be sufficient for generating a high-quality prediction result due to insufficiency and imbalance of the data samples.
If we can centralize the datasets from multiple institutions and analyze them as one dataset, i.e., {\it centralized analysis}, then we expect to achieve a high-quality prediction.
However, it is difficult to centralise the original medical data samples with those from other institutions due to confidentiality concerns.
Such kind of distributed data analysis is also essential in other applications, e.g., financial and manufacturing data analysis.
\subsection{Outline of the collaborative data analysis}
The collaborative data analysis has been proposed in \cite{imakura2020data,imakura2020collaborative} as a method of distributed data analysis.
A practical operation strategy regarding privacy and confidentiality concerns is also introduced.
Here, we briefly introduce the algorithm based on the practical operation strategy.
\par
In the practical operation strategy, the collaborative data analysis is operated by two roles: {\it user} and {\it analyst}.
The users have the private dataset $X_{i}$ and the corresponding ground truth $Y_i$ and want to analyze them without sharing $X_{i}$.
Each user individually constructs a dimensionality reduced intermediate representation and centralize it to analyst.
To allow each user to use individual function for generating the intermediate representation, analyst transforms again the centralized intermediate representations to an incorporable form called {\it collaboration representations}.
For constructing the incorporable collaboration representations, users generate a shareable {\it anchor dataset} and centralize its intermediate representation to analyst.
Then, the collaborative representation is analyzed as one dataset.
\subsubsection{Training phase}
First, all users generate the same anchor dataset $X^{\rm anc} \in \mathbb{R}^{r \times m}$, which is a shareable data consisting of public data or dummy data randomly constructed, and partition it by features.
Then, each user constructs the intermediate representations,
\begin{align*}
  \widetilde{X}_{i} = f_{i}(X_{i}) \in \mathbb{R}^{n_i \times \widetilde{m}_{i}}, \quad
  \widetilde{X}_{i}^{\rm anc} = f_{i}(X^{\rm anc}) \in \mathbb{R}^{r \times \widetilde{m}_{i}},
\end{align*}
where $f_{i}$ denotes a linear or nonlinear row-wise mapping function and centralize the intermediate representations to the analyst.
A typical setting for $f_{i}$ is a dimensionality reduction, with $\widetilde{m}_{i} < m$, including unsupervised methods \cite{pearson1901liii,he2004locality,maaten2008visualizing} and supervised methods \cite{fisher1936use,sugiyama2007dimensionality,li2017locality,imakura2019complex}.
For privacy and confidentiality concerns, the function $f_{i}$ should be set as
\begin{itemize}
  \item The private data $X_{i}$ can be obtained only if anyone has both the corresponding intermediate representation $\widetilde{X}_{i}$ and the mapping function $f_{i}$ or its approximation.
  \item The mapping function $f_{i}$ can be inferred only if anyone has both the input and output of $f_{i}$.
\end{itemize}
\par
At the analyst side, the mapping function $g_i$ for the collaboration representation is constructed satisfying
\begin{equation*}
  \widehat{X}_i^{\rm anc} = g_i(\widetilde{X}_i^{\rm anc}) \in \mathbb{R}^{r \times \widehat{m}}
        \quad \mbox{s.t. }
        \widehat{X}_{i}^{\rm anc} \approx \widehat{X}_{i'}^{\rm anc} \quad
        (i \neq i'),
\end{equation*}
in some sense.
For computing $g_i$, authors of \cite{imakura2020data,imakura2020collaborative} introduced a practical method via a total least squares problem when $g_i$ is linear and also indicated an idea when $g_i$ is nonlinear.
\par
Then, the obtained collaboration representations $\widehat{X}_i = g_i(\widetilde{X}_i)$ can be analyzed as one dataset,
\begin{equation*}
  \widehat{X} = [\widehat{X}_1^{\rm T}, \widehat{X}_2^{\rm T}, \dots, \widehat{X}_c^{\rm T}]^{\rm T} \in \mathbb{R}^{n \times \widehat{m}},
\end{equation*}
with the shared ground truth $Y_i$ using some supervised machine learning and the deep learning methods.
The functions $g_i$ and $h$ are returned to the $i$-th user.
\subsubsection{Prediction Phase}
\label{sec:prediction}
%
%
Let $X_i^{\rm test} \in \mathbb{R}^{s_i \times m}$ be a test dataset of the $i$-th party.
Then, for prediction phase, the predictive result $Y_i^{\rm test}$ of $X_i^{\rm test}$ is obtained by
\begin{equation*}
Y_i^{\rm test} = h( g_i(f_i (X_{i}^{\rm test})))
\end{equation*}
via the intermediate and collaboration representations.
\begin{algorithm*}[!t]
\caption{Collaborative data analysis}
\label{alg:proposed}
\begin{algorithmic}
  \REQUIRE $X_{i} \in \mathbb{R}^{n_i \times m}, Y_i \in \mathbb{R}^{n_i \times \ell}, X_{i}^{\rm test}$, individually
  \ENSURE $Y_i^{\rm test}$ $(i = 1, 2, \dots, c)$.
  \STATE
  \STATE
  \begin{tabular}{rlcl}
    & \multicolumn{1}{c}{ {\it user side} $(i)$} && \multicolumn{1}{c}{ {\it analyst side} } \\ \cmidrule{2-2} \cmidrule{4-4}
    & \multicolumn{3}{c}{ ---------- Training phase ----------} \\
    1:  & Generate $X^{\rm anc}_{i}$ and share to all users && \\
    2:  & Set $X^{\rm anc}$ && \\
    3:  & Generate $f_{i}$ && \\
    4:  & Compute $\widetilde{X}_{i} = f_{i}(X_{i})$ and $\widetilde{X}^{\rm anc}_{i} = f_{i}(X^{\rm anc})$ && \\
    5:  & Share $\widetilde{X}_{i}, \widetilde{X}_{i}^{\rm anc}$ and $Y_i$ to analyst & $\rightarrow$ & Get $\widetilde{X}_{i}, \widetilde{X}_{i}^{\rm anc}$ and $Y_i$ for all $i$ \\
    6:  & && Construct $g_i$ from $\widetilde{X}_{i}^{\rm anc}$ for all $i$ \\
    7:  & && Compute $\widehat{X}_{i} = g_i(\widetilde{X}_{i})$ for all $i$ \\
    8:  & && Set $\widehat{X}$ and $Y$ \\
    9:  & && Analyze $\widehat{X}$ and get $h$ as $Y \approx h(\widehat{X})$ \\
    10: & Get $g_i$ and $h$ &$\leftarrow$& Return $g_i$ and $h$ to user \\
    \\
    & \multicolumn{3}{c}{ ---------- Prediction phase ----------} \\
    11: & Compute $Y_i^{\rm test} = h(g_i(f_i(X_i^{\rm test})))$ && \\
  \end{tabular}
\end{algorithmic}
\end{algorithm*}
\section{Accuracy analysis}
We analyze the accuracy of the collaborative data analysis for the simple case that the mapping functions $f_{i}$ and $g_i$ are linear, i.e.,
\begin{align*}
  &\widetilde{X}_{i} = f_{i}(X_{i}) = X_{i} F_{i}, \quad F_{i} \in \mathbb{R}^{m \times \widetilde{m}} \quad ({\rm rank}(F_{i}) = \widetilde{m}), \\
  &\widehat{X}_i = g_i(\widetilde{X}_i) = \widetilde{X}_i G_i, \quad G_i \in \mathbb{R}^{\widetilde{m} \times \widetilde{m}} \quad ({\rm rank}(G_i) = \widetilde{m}).
\end{align*}
Here, for simplicity, we assume that the dimensionality of $\widetilde{X}_{i}$ does not depend on $i$.
Also, the matrices $G_i$ are computed as introduced in \cite{imakura2020data,imakura2020collaborative}, that is, 
\begin{equation}
  \min_{G_i \in \mathbb{R}^{\widetilde{m}_i \times \widehat{m}} } \sum_{i=1}^c \| Z - \widetilde{X}_i^{\rm anc} G_i \|_{\rm F}^2,
  \label{eq:norm}
\end{equation}
where $Z \in \mathbb{R}^{r \times \widetilde{m}}$ is set as a column orthogonal matrix whose columns are the left singular vectors corresponding to the $\widetilde{m}$ largest singular values of a matrix
\begin{equation*}
  [\widetilde{X}^{\rm anc}_1, \widetilde{X}^{\rm anc}_2, \dots, \widetilde{X}^{\rm anc}_c] = X^{\rm anc}[F_1, F_2, \dots, F_c].
\end{equation*}
\subsection{Theoretical evaluation for accuracy}
In this paper, we analyze the accuracy of the collaborative data analysis compared with the centralized analysis with dimensionality reduction $B \in \mathbb{R}^{m \times \widetilde{m}}$ based on the norm
\begin{align}
  &\left| \! \left| X B - \left[ \begin{array}{c}
      X_1 F_1 G_1 \\
      X_2 F_2 G_2 \\
      \vdots \\
      X_c F_c G_c
    \end{array}
  \right] \right| \! \right|_{\rm F}^2 / \| X \|_{\rm F}^2 \nonumber \\
  &\quad = \sum_{i=1}^c \| X_i B - X_i F_i G_i \|_{\rm F}^2 / \| X \|_{\rm F}^2.
  \label{eq:error}
\end{align}
With the anchor dataset $X^{\rm anc}$ preserving statistics of $X$, we evaluate the accuracy \eqref{eq:error} by
\begin{align*}
  &\sum_{i=1}^c \| X_i B - X_i F_i G_i \|_{\rm F}^2 / \| X \|_{\rm F}^2 \\
  &\quad \approx \sum_{i=1}^c \| X^{\rm anc} B - X^{\rm anc} F_i G_i \|_{\rm F}^2 / (c \| X^{\rm anc} \|_{\rm F}^2) \\
  &\quad \leq \frac{ \sum_{i=1}^c \| X^{\rm anc}B - Z \|_{\rm F}^2  +  \| Z - X^{\rm anc} F_i G_i \|_{\rm F}^2}{ c \| X^{\rm anc} \|_{\rm F}^2}.
\end{align*}
\par
Under the assumption that $f_{i}$ are linear, we have $\widetilde{X}_i = X_i F_i$, where $X_i \in \mathbb{R}^{n_i \times m}$ and $F_i \in \mathbb{R}^{m \times \widetilde{m}}$.
Let $F = [F_1, F_2, \dots,F_c]$ and
\begin{align*}
  & X^{\rm anc} F = U \Sigma V^{\rm T} = [U_1, U_2] \left[
    \begin{array}{cc}
      \Sigma_1 & \\
      & \Sigma_2 
    \end{array}
  \right] \left[
    \begin{array}{c}
      V_1^{\rm T} \\
      V_2^{\rm T}
    \end{array}
  \right], \\
  & F = U_F \Sigma_F V_F^{\rm T} = [U_{F1}, U_{F2}] \left[
    \begin{array}{cc}
      \Sigma_{F1} & \\
      & \Sigma_{F2} 
    \end{array}
  \right] \left[
    \begin{array}{c}
      V_{F1}^{\rm T} \\
      V_{F2}^{\rm T}
    \end{array}
  \right]
\end{align*}
be singular value decompositions of matrices $X^{\rm anc}F$ and $F$.
Here, $\Sigma_1, \Sigma_{F1} \in \mathbb{R}^{\widetilde{m} \times \widetilde{m}}$ are the diagonal matrices corresponding to $\widetilde{m}$ largest singular values.
Note that $Z = U_1$.
Then, we have
\begin{align*}
  \| \Sigma_2 \|_{\rm F}^2 
  & = \min_{ {\rm rank}(\widetilde{X}) = \widetilde{m}}  \| X^{\rm anc} F - \widetilde{X} \|_{\rm F}^2 \\
  & \leq \| X^{\rm anc}  F - X^{\rm anc} U_{F1} \Sigma_{F1} V_{F1}^{\rm T} \|_{\rm F}^2 \\
  & \leq \| X^{\rm anc}  \|_{\rm F}^2 \| F - U_{F1} \Sigma_{F1} V_{F1}^{\rm T} \|_{\rm F}^2 \\
  & = \| X^{\rm anc} \|_{\rm F}^2 \| U_{F2} \Sigma_{F2} V_{F2}^{\rm T} \|_{\rm F}^2 \\
  & = \| X^{\rm anc} \|_{\rm F}^2 \| \Sigma_{F2} \|_{\rm F}^2.
\end{align*}
Using this inequality, the norm \eqref{eq:norm} can be bounded by
\begin{align*}
  & \min_{G_i} \sum_{i=1}^c \| Z - {X}^{\rm anc} F_i G_i \|_{\rm F}^2  \nonumber \\
  & \quad = \sum_{i=1}^c \min_{G_i} \| Z - {X}^{\rm anc} F_i G_i \|_{\rm F}^2  \nonumber \\
  & \quad \leq \sum_{i=1}^c \| G_i \|_{\rm F}^2 \min_{G_i^{-1}} \| Z G_i^{-1}  - {X}^{\rm anc} F_i \|_{\rm F}^2 \nonumber \\
  & \quad \leq \left( \max_i \| G_i \|_{\rm F}^2 \right) \min_{G_i^{-1}} \| Z [G_1^{-1}, G_2^{-1}, \dots, G_c^{-1}] \nonumber \\
  & \quad \hphantom{\leq \left( \max_i \| G_i \|_{\rm F}^2 \right) \min_{G_i^{-1}}\| } \quad - {X}^{\rm anc} [F_1, F_2, \dots, F_c] \|_{\rm F}^2 \nonumber \\
  & \quad = \left( \max_i \| G_i \|_{\rm F}^2 \right) \| \Sigma_2 \|_{\rm F}^2 \nonumber \\
  & \quad \leq \left( \max_i \| G_i \|_{\rm F}^2 \right) \| X^{\rm anc} \|_{\rm F}^2 \| \Sigma_{F2} \|_{\rm F}^2.
  \label{eq:bound}
\end{align*}
Therefore, the accuracy of the collaborative data analysis \eqref{eq:error} can be evaluated by $\| \Sigma_{2} \|_{\rm F}^2$ and $\| \Sigma_{F2} \|_{\rm F}^2$.
Note that the value $\| \Sigma_2 \|_{\rm F}^2$ can be obtained at the analyst side.
\par
This bound provides the following theorem.
\begin{theorem}
If the mapping functions satisfy 
\begin{equation}
  \mathcal{R}(F_1) = \mathcal{R}(F_2) = \dots = \mathcal{R}(F_c), \quad {\rm rank}(X^{\rm anc}F_i) = \widetilde{m},
  \label{eq:cond}
\end{equation}
the predictive results of the collaborative data analysis is mathematically equivalent to that of the centralized analysis with dimensionality reduction $F_1 G_1$.
\end{theorem}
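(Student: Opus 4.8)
The plan is to show that the conditions \eqref{eq:cond} force the map-level equality $F_i G_i = F_1 G_1$ for every $i$, from which the block equality $X_i F_i G_i = X_i F_1 G_1$ follows and the error \eqref{eq:error} with the centralized map $B = F_1 G_1$ vanishes identically. Since this equality holds at the level of the mapping matrices, it is independent of the data $X_i$ and of the anchor approximation used to derive the preceding bound; hence the training representation $\widehat{X}$ and therefore the learned $h$ coincide in the two settings, and so do the predictions $h(X_i^{\rm test} F_i G_i) = h(X_i^{\rm test} F_1 G_1)$ of the prediction phase. Thus the whole theorem reduces to establishing $F_i G_i = F_1 G_1$.

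First I would exploit the range condition $\mathcal{R}(F_1) = \dots = \mathcal{R}(F_c)$ together with ${\rm rank}(F_i) = \widetilde{m}$. Because $F_i$ and $F_1$ are $m \times \widetilde{m}$ matrices of full column rank sharing the same column space, every column of $F_i$ lies in $\mathcal{R}(F_1)$, so there exists a matrix $T_i \in \mathbb{R}^{\widetilde{m} \times \widetilde{m}}$ with $F_i = F_1 T_i$; comparing ranks shows $T_i$ is invertible, with $T_1 = I$.

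Next I would solve the defining least-squares problem \eqref{eq:norm} for each $G_i$. The objective decouples over $i$, and the second condition ${\rm rank}(X^{\rm anc} F_i) = \widetilde{m}$ makes $X^{\rm anc} F_i$ of full column rank, so each subproblem has the unique minimizer $G_i = (X^{\rm anc} F_i)^+ Z$. Writing $A = X^{\rm anc} F_1$ and substituting $F_i = F_1 T_i$ gives $X^{\rm anc} F_i = A T_i$, and since $A$ has full column rank while $T_i$ is square and invertible, the pseudoinverse factorizes as $(A T_i)^+ = T_i^{-1} A^+$. Hence $G_i = T_i^{-1} A^+ Z = T_i^{-1} G_1$, and therefore $F_i G_i = F_1 T_i T_i^{-1} G_1 = F_1 G_1$, as required.

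The main obstacle is the pseudoinverse identity $(A T_i)^+ = T_i^{-1} A^+$: it holds here only because $A$ has full column rank and $T_i$ is square and invertible, which is exactly what the two hypotheses in \eqref{eq:cond} supply, and it fails without them. As a consistency check, these same hypotheses give ${\rm rank}(X^{\rm anc} F) = {\rm rank}(X^{\rm anc} F_1) = \widetilde{m}$, so the best rank-$\widetilde{m}$ approximation in the bound above is exact, $\| \Sigma_2 \|_{\rm F}^2 = 0$; the derived bound then already yields $\min_{G_i} \sum_{i=1}^c \| Z - X^{\rm anc} F_i G_i \|_{\rm F}^2 = 0$, which matches the identity $X^{\rm anc} F_i G_i = X^{\rm anc} F_1 G_1 = Z$ obtained above and confirms that the collaboration representations of the anchor become exactly incorporable.
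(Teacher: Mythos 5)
Your proof is correct and arrives at exactly the identity $F_1G_1 = F_2G_2 = \cdots = F_cG_c$ on which the paper's proof rests; the paper simply asserts this implication from $\Sigma_{F2}=O$ (i.e., ${\rm rank}(F)=\widetilde{m}$), whereas you derive it explicitly by writing $F_i = F_1 T_i$ with $T_i$ invertible and using the pseudoinverse identity $(A T_i)^+ = T_i^{-1}A^+$ for the full-column-rank matrix $A = X^{\rm anc}F_1$. Your version in fact supplies details the paper omits --- in particular, why ${\rm rank}(X^{\rm anc}F_i)=\widetilde{m}$ is needed to pass from $X^{\rm anc}F_iG_i = Z$ for all $i$ to equality of the maps $F_iG_i$ themselves.
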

\begin{proof}
The condition \eqref{eq:cond} provides $\Sigma_{F2}=O$, then we have
\begin{equation*}
  F_1 G_1 = F_2 G_2 = \dots = F_c G_c,
\end{equation*}
that proves the theorem.
\end{proof}
%
%
\subsection{Numerical evaluation for accuracy}
Here, we provide numerical evaluation of the accuracy analysis.
We used a 10-class classification of handwritten digits (MNIST) \cite{lecun1998mnist}, where $m=784$.
We set the number of parties as $c=4$ and the number of samples for each party as $n_i = 50$.
\par
Let $B \in \mathbb{R}^{784 \times 25}$ be a mapping function generated by PCA using all the training dataset $X$.
We then set
\begin{equation*}
  F_i = B E_i^{(1)} + \varepsilon \| B \|_{\rm F} E_i^{(2)}, \quad
  i = 1, 2, \dots, c,
\end{equation*}
with $E_i^{(1)} \in \mathbb{R}^{25 \times 25}$ and $E_i^{(2)} \in \mathbb{R}^{784 \times 25}$ whose entries are normally distributed random numbers.
We used a kernel version of ridge regression (K-RR) \cite{saunders1998ridge} with a Gaussian kernel for analyzing the collaboration representation.
The bandwidth $\sigma$ of the Gaussian kernel is set based on the local scaling \cite{zelnik2005self}.
We set the regularization parameter for K-RR to $\lambda = 0.1$.
The anchor data $X^{\rm anc}$ is constructed as a random matrix and $r= 2,000$.
Then, we evaluate the following four values,
\begin{align*}
  &\tau_1 = \| \Sigma_2 \|_{\rm F} / \| \Sigma_1 \|_{\rm F}, \\
  &\tau_2 = \| \Sigma_{F2} \|_{\rm F} / \| \Sigma_{F} \|_{\rm F}, \\
  &\tau_3 = \left| \! \left| X F_1 G_1 - \left[ \begin{array}{c}
      X_1 F_1 G_1 \\
      X_2 F_2 G_2 \\
      \vdots \\
      X_c F_c G_c
    \end{array}
  \right] \right| \! \right|_{\rm F} / \| X F_1 G_1 \|_{\rm F}, \\
  &\tau_4 = 1-{\rm NMI}(Y^{\rm test}_{\rm CDA}, Y^{\rm test}_{\rm CA}),
\end{align*}
where, in $\tau_4$, the value ${\rm NMI}(Y^{\rm test}_{\rm CDA}, Y^{\rm test}_{\rm CA}) \in [0,1]$ denotes the normalized mutual information (NMI) between the prediction results of test dataset $X^{\rm test}$ of the collaborative data analysis and the centralized analysis with dimensionality reduction $F_1 G_1$.
\par
All the numerical experiments were performed on Windows 10 Pro, Intel(R) Core(TM) i7-10710U CPU @ 1.10GHz, 16GB RAM using MATLAB2019b.
\par
First, we perform the methods 10 times with $\varepsilon = 0$, that is $\mathcal{R}(F_i) = \mathcal{R}(F_{i'})$, but $F_i \neq F_{i'}$ for $i \neq i'$.
The obtained average values are 
\begin{align*}
  &\tau_1 = 8.42 \times 10^{-16}, \quad
  \tau_2 = 2.24 \times 10^{-16}, \\
  &\tau_3 = 1.44 \times 10^{-13}, \quad
  \tau_4 = 0.00,
\end{align*}
that mean the collaborative data analysis obtains the same result as the centralized analysis with the dimensionality reduction, i.e., $Y^{\rm test}_{\rm CDA} = Y^{\rm test}_{\rm CA}$.
This result supports Theorem~1.
\par
Next, we perform the methods 500 times with random $\varepsilon \in [10^{-2}, 10^{-6}]$ and evaluate correlation coefficients of $\log(\tau_i)$.
Figure~\ref{fig:ccm} shows the scatter plot matrix and correlation coefficients for $\log(\tau_i)$, which demonstrates that the accuracy of the collaborative data analysis regarding $\tau_4$ has a strong correlation between $\tau_1, \tau_2$ and $\tau_3$.
Therefore, from this result, we observed that the accuracy of the collaborative data analysis $\tau_4$ can be evaluated well by $\tau_1$ in practice.
Note that $\tau_1$ can be obtained at the analyst side.
\begin{figure}[t]
\centering
\includegraphics[scale=0.6, bb = 91 252 490 573]{./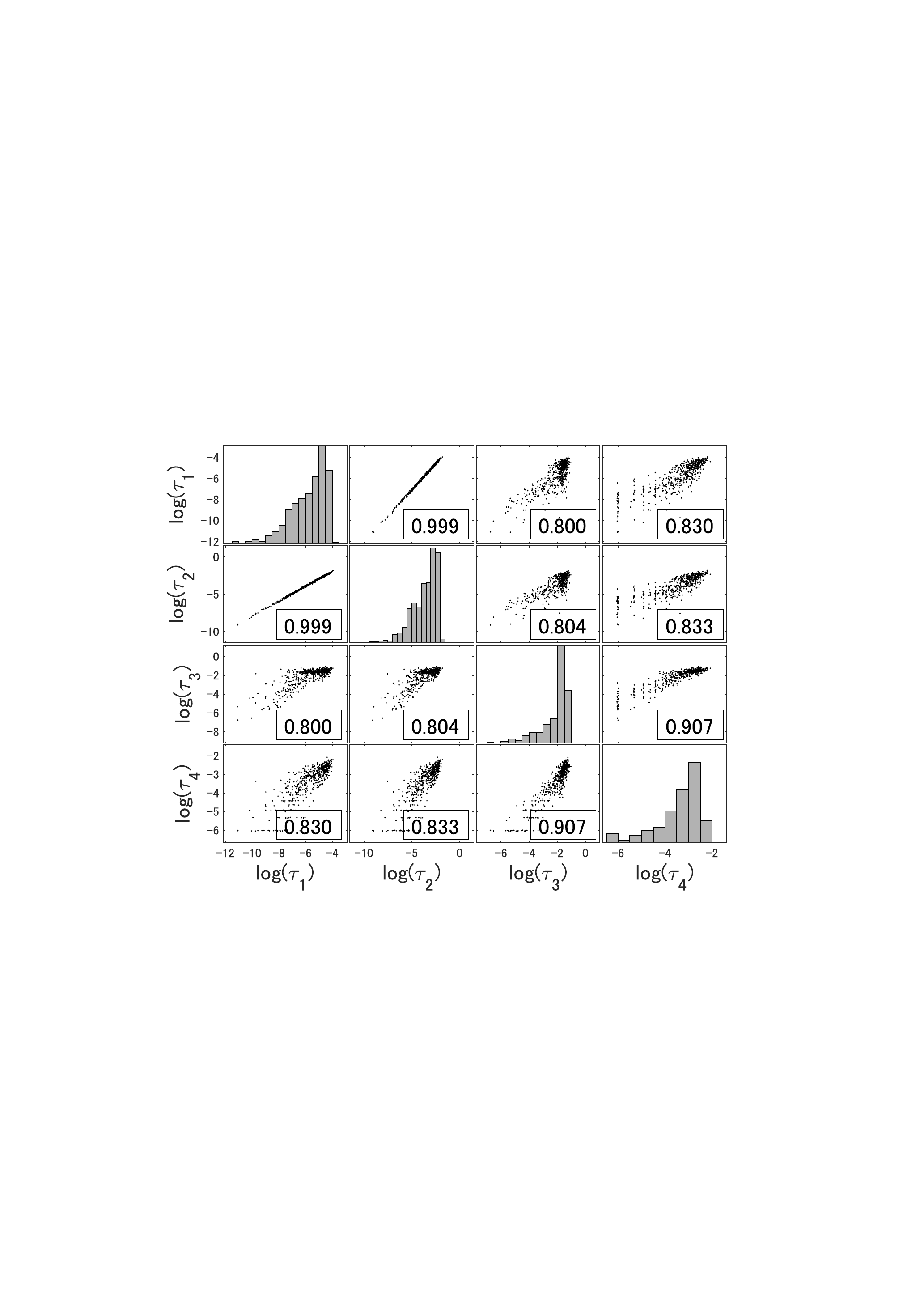}
\caption{Scatter plot matrix and correlation coefficients for $\log(\tau_i)$.}
\label{fig:ccm}
\end{figure}
\par
We can also observed from Figure~\ref{fig:bound} that $\tau_4$ is roughly bounded by 
\begin{equation*}
  \tau_4 \leq c \sqrt{\tau_1} \quad \left( \Leftrightarrow \log(\tau_4) \leq \frac{\log(\tau_1)}{2} + \log(c) \right)
\end{equation*}
with some constant $c$.
Note that $c = \exp(0.5)$ in Figure~\ref{fig:bound}.
%
\begin{figure}[t]
\centering
\includegraphics[scale=0.35, bb = 91 252 490 573]{./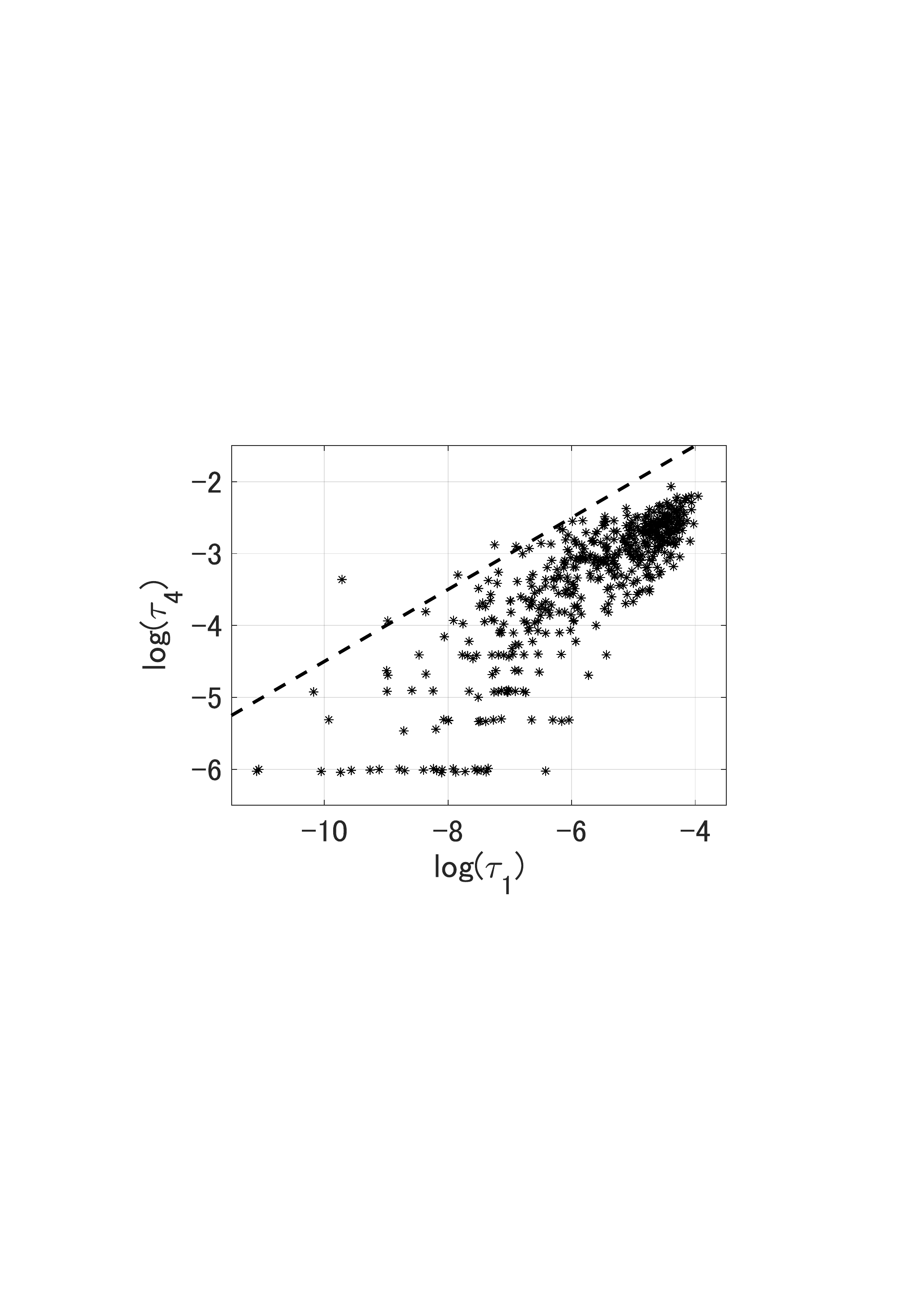}
\caption{Scatter plot of $\log(\tau_4)$ v.s. $\log(\tau_1)$ and its rough bound $\log(\tau_1)/2 + 0.5$.}
\label{fig:bound}
\end{figure}
\subsection{Remarks on accuracy analysis}
From the above analysis, we observed that 
\begin{itemize}
  \item If $F_i$ satisfy the sufficient condition \eqref{eq:cond}, the collaborative data analysis achieves the same result of the centralized analysis with dimensionality reduction.
  \item The accuracy of the collaborative data analysis compared with the centralized analysis with dimensionality reduction can be evaluated by $\| \Sigma_{2} \|_{\rm F}$ in practice.
\end{itemize}
\par
Note that, in order to obtain (approximately) the same predictive results as the centralized analysis with dimensionality reduction, we do not need to use the same mapping functions $f_{i}$, but use different functions with (approximately) the same range space.
\section{Privacy analysis}
For the analysis, this paper considers the privacy of the private data $X_{i}$ of each user in the collaborative data analysis.
Note that any information of the test data $X_{i}^{\rm test}$ does not have to be shared to others; see Algorithm~\ref{alg:proposed}.
\par
Attacks for the data $X_{i}$ can be classified into 
(i) attacks to infer the characteristics of the training data;
(ii) attacks to infer the training data $X_{i}$ itself; and
(iii) attacks to infer whether a data sample is in the training dataset or not, so-called the membership inference attack.
\par
This paper considers the privacy of the data $X_i$ itself, rather than the characteristics of the data.
We also shortly discuss the privacy against the membership inference attack.
%
%
\subsection{Privacy definitions of dimensionality reduction: $\varepsilon$-DR privacy}
Here, we introduce two dimensionality reduction (DR) privacy definitions: $\varepsilon$-DR privacy introduced in \cite{nguyen2019auto} and its variant, to evaluate the degree to which privacy is preserved through dimensionality reduction.
Let $f: {\bm x} \in \mathbb{R}^m \rightarrow \widetilde{\bm x} \in \mathbb{R}^{\widetilde{m}}$ $(m > \widetilde{m})$ be a dimensionality reduced function and $f^\dagger$ be a reconstruction function of $f$.
Then, we evaluate degree of privacy preservation using ${\rm dist}({\bm x}, {\bm x}')$ with a certain distance measure ${\rm dist}(\cdot,\cdot)$, where ${\bm x}' = f^\dagger(f({\bm x}))$.
\par
The function $f$ satisfies the $\varepsilon$-DR privacy regarding the expected value, if we have
\begin{equation}
  E[{\rm dist}({\bm x}, {\bm x}')] \geq \varepsilon_1
  \label{eq:eDR}
\end{equation}
for each i.i.d. input sample ${\bm x}$.
The value $\varepsilon_1$ depends on $f$ and is always larger than $0$ with $m > \widetilde{m}$.
\par
Also, a function $f$ satisfies the $\varepsilon$-DR privacy regarding a sample set $\mathcal{X} = \{ {\bm x}_1, {\bm x}_2, \dots, {\bm x}_n \}$, if we have
\begin{equation}
  \min_{ {\bm x} \in \mathcal{X} } {\rm dist}({\bm x}, {\bm x}') \geq \varepsilon_2.
  \label{eq:worst}
\end{equation}
The value $\varepsilon_2$ depends on $f$ and a set of samples $\mathcal{X}$.
Therefore, since $\varepsilon$ depends on $\mathcal{X}$, $\varepsilon_2$ is possible to be $0$ even if $m > \widetilde{m}$.
\subsection{Attacking scenarios}
In this paper, we consider the following two attacking scenarios: insider and external attacks.
\begin{itemize}
  \item {\it Insider attacks}.
    Here, the users and analyst will strictly follow the strategy, but they try to infer the private data $X_i$.
  \item {\it External attacks}.
    Here, we consider a man-in-the-middle attack scenario where an attacker eavesdrops the information exchanged among users and analyst and infer the private data $X_i$.
\end{itemize}
\subsection{Privacy against the honest-but-curious analyst}
\begin{theorem}
  For the collaborative data analysis, an honest-but-curious analyst cannot infer the private dataset $X_{i}$ of the users, unless analyst does not collude with user(s).
\end{theorem}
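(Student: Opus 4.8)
The plan is to reduce the claim to the two structural privacy requirements imposed on each $f_i$ in Section~2 — namely, that $X_i$ is recoverable only from the pair consisting of $\widetilde{X}_i$ and $f_i$ (or its approximation), and that $f_i$ is inferable only from a matching input--output pair of $f_i$. The entire argument is a short logical chain built on top of these assumed properties, so the substantive work is bookkeeping: pinning down exactly which quantities an honest-but-curious analyst holds.

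First I would read off the analyst's view from Algorithm~\ref{alg:proposed}. In step~5 the analyst receives $\widetilde{X}_i$, $\widetilde{X}_i^{\rm anc}$, and $Y_i$ for every $i$, and thereafter forms $g_i$, $\widehat{X}_i$, and $h$ from these quantities alone. The key observation is that the analyst never receives $X_i$ (the object to be protected) nor the anchor $X^{\rm anc}$ itself: the anchor is generated at the user side (steps~1--2) and shared \emph{only} among users. Consequently, every object in the analyst's possession that depends on $f_i$ is an \emph{output} of $f_i$ — namely $\widetilde{X}_i = f_i(X_i)$ and $\widetilde{X}_i^{\rm anc} = f_i(X^{\rm anc})$ — while the analyst holds none of the corresponding inputs.

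Next I would chain the two requirements. Applying the second requirement, inferring $f_i$ (or a useful approximation) needs both an input and the matching output; since the analyst possesses only outputs and neither $X_i$ nor $X^{\rm anc}$, the analyst cannot infer $f_i$. Feeding this into the first requirement closes the argument: reconstructing $X_i$ requires both $\widetilde{X}_i$ and $f_i$, and although the analyst has the former, the preceding step denies it the latter, so $X_i$ stays hidden. This two-stage barrier — the secrecy of $f_i$ guarding $\widetilde{X}_i$, and the absence of any input--output pair guarding $f_i$ — is precisely the double privacy layer claimed in the contributions.

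Finally I would isolate the role of collusion, which is what the exception in the statement concerns. The only shield protecting $f_i$ is the secrecy of the anchor \emph{input}: the analyst already holds the anchor output $\widetilde{X}_i^{\rm anc}$, so if any user colludes and supplies $X^{\rm anc}$, the analyst instantly obtains a matching pair $(X^{\rm anc}, \widetilde{X}_i^{\rm anc})$, infers $f_i$ by the second requirement, and reconstructs $X_i$ from $\widetilde{X}_i$ and $f_i$ by the first. This confirms that the no-collusion hypothesis is genuinely necessary, not merely convenient. I expect the first step to be the main (indeed only) delicate point: the whole conclusion hinges on verifying from the protocol that the analyst is never in possession of even a single input--output pair of $f_i$, after which the result follows by direct invocation of the assumed privacy properties of $f_i$.
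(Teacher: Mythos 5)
Your proposal is correct and follows essentially the same route as the paper's own proof: the analyst holds only the outputs $\widetilde{X}_i = f_i(X_i)$ and $\widetilde{X}_i^{\rm anc} = f_i(X^{\rm anc})$ but no input of $f_i$, so by the stated privacy requirements on $f_i$ it can infer neither $f_i$ nor, consequently, $X_i$, absent collusion. Your added remarks on why the anchor input stays user-side and why the no-collusion hypothesis is genuinely needed are consistent elaborations of the same argument rather than a different approach.
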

\begin{proof}
For a privacy of $X_{i}$ against the honest-but-curious analyst, each user shares the intermediate representations $\widetilde{X}_{i}$ and $\widetilde{X}_{i}^{\rm anc}$ to analyst.
Here, we consider the possibility of recovering $X_i$ from $\widetilde{X}_{i}$ and $\widetilde{X}_{i}^{\rm anc}$.
\par
If analyst has a mapping function $f_{i}$, analyst can infer $X_{i}$ by solving
\begin{equation*}
  \widetilde{X}_{i} = f_{i} ({X}_{i}).
\end{equation*}
However, the function $f_{i}$ is private in the collaborative data analysis and also cannot be inferred by analyst, because analyst only has the output of $f_{i}$, that is the intermediate representations, but has no input data of $f_{i}$, if analyst does not collude with user(s).
Note that the function $f_i$ is constructed by some dimensionality reduction method with the private data $X_i$.
The function $f_i$ depends on $X_i$, so even if the dimensionality reduction method is identified, $f_i$ itself cannot be inferred.
\par
Thus, analyst cannot obtain the private data $X_{i}$ from the intermediate representations, that proves the theorem.
\end{proof}
\subsection{Privacy against the honest-but-curious users}
\begin{theorem}
  For the collaborative data analysis, an honest-but-curious users cannot infer the private dataset $X_{i}$ of a particular user against collusion of up to $c-2$ users.
\end{theorem}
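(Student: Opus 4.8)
The plan is to reuse the ``double privacy layer'' reasoning applied to the honest-but-curious analyst, but now against the information actually available to a colluding set of users. First I would fix an honest target user $i$ and a colluding set $S$ with $|S| \le c-2$, and carefully enumerate what the members of $S$ jointly possess under the (honest) protocol: the public anchor $X^{\rm anc}$; their own private data, mapping functions, intermediate representations, ground truths, and the returned matrices $g_j$ and $h$ for $j \in S$; and, as a consequence, the shared subspace $Z$, which any colluding user can reproduce from $\widetilde{X}_j^{\rm anc} G_j = \widehat{X}_j^{\rm anc} \approx Z$. The point to stress is that they possess neither $\widetilde{X}_i$ (shared only to the analyst, never to users) nor $f_i$ (private to user $i$).

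Next I would invoke the double-layer principle stated in the privacy requirements on $f_i$: recovering $X_i$ requires both the intermediate representation $\widetilde{X}_i$ and the mapping $f_i$ (or its approximation). Since $\widetilde{X}_i$ is never transmitted to any user, the colluding set lacks this layer outright, which already places them in a weaker position than the analyst of the previous theorem. I would then show they cannot reconstruct the second layer $f_i$ either: by the assumed property that $f_i$ is inferable only from matched input--output pairs, and since they already hold the input $X^{\rm anc}$, their only route is to obtain the output $\widetilde{X}_i^{\rm anc} = f_i(X^{\rm anc})$. I would argue this output is unreachable from what they hold, because the relation $\widetilde{X}_i^{\rm anc} G_i \approx Z$ would require the factor $G_i$, which is returned exclusively to user $i$, while knowledge of $Z$ alone pins down only the common subspace and not the individual factor.

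The step I expect to be the crux, and the reason the bound is $c-2$ rather than $c-1$, is ruling out a subspace-pooling attack. The colluding set knows every block $\widetilde{X}_j^{\rm anc}$ for $j \in S$ together with the top-$\widetilde{m}$ left singular structure $Z$ of the full concatenation $[\widetilde{X}_1^{\rm anc},\dots,\widetilde{X}_c^{\rm anc}]$, so they can subtract their own contribution and attempt to isolate the pooled term $\sum_{k \notin S} \widetilde{X}_k^{\rm anc}(\widetilde{X}_k^{\rm anc})^{\rm T}$ of the honest users. I would show that when exactly one honest user remains this pooled quantity degenerates to the single rank-$\widetilde{m}$ outer product $\widetilde{X}_i^{\rm anc}(\widetilde{X}_i^{\rm anc})^{\rm T}$, which would reveal $\widetilde{X}_i^{\rm anc}$ up to an orthogonal factor and thereby leak $f_i$; requiring $|S| \le c-2$ keeps at least two honest contributors, so even granting the attacker the pooled term it is an indecomposable sum of two or more unknown outer products from which no individual $\widetilde{X}_i^{\rm anc}$ can be separated. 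The main obstacle is making this separation argument precise: I would need to quantify exactly what $Z$ (and the returned $g_j, h$) reveal about the honest blocks and confirm that, with two or more honest contributors, the map from blocks to $Z$ is genuinely non-invertible on each individual block, so that $f_i$, and hence $X_i$, stays hidden.
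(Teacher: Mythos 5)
There is a gap, and it concerns where you locate the reason the bound is $c-2$ rather than $c-1$. The paper's proof identifies the decisive attack surface as the \emph{returned functions} $g_i$ and, above all, $h$: the model $h$ is fitted so that $Y \approx h(\widehat{X})$, where $\widehat{X}$ contains the victim's collaboration representation $\widehat{X}_{i'} = g_{i'}(f_{i'}(X_{i'}))$, so $h$ genuinely encodes information about $X_{i'}$ itself. With $c-1$ colluders this becomes an inverse problem with a single unknown party's contribution and the paper asserts $X_{i'}$ can then be inferred; with $c-2$ colluders the contributions of the two remaining honest users are mixed ``with equal importance'' and cannot be separated. Your proposal mentions $g_j$ and $h$ only in passing at the very end and instead makes the crux a subspace-pooling attack on $Z$ built from the anchor representations. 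That attack cannot be the reason for the $c-2$ threshold: as you yourself observe, $\widetilde{X}_{i'}$ is never transmitted to any user, so even a fully successful recovery of $\widetilde{X}_{i'}^{\rm anc}$ (hence of $f_{i'}$) from $Z$ would still not yield $X_{i'}$. Your argument, if it worked, would prove the theorem for up to $c-1$ colluders, contradicting the paper's explicit claim that $c-1$ colluders \emph{can} infer the victim's data; this signals that the attack you are ruling out is not the one that matters.

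Two secondary issues. First, the premise of your pooling attack is shaky: $Z$ is only an orthonormal basis of the top-$\widetilde{m}$ left singular subspace of $[\widetilde{X}_1^{\rm anc},\dots,\widetilde{X}_c^{\rm anc}]$, and knowing that subspace together with the colluders' own blocks does not determine the pooled Gram matrix $\sum_{k\notin S}\widetilde{X}_k^{\rm anc}(\widetilde{X}_k^{\rm anc})^{\rm T}$, so the ``indecomposable sum of outer products'' step starts from data the attacker does not actually possess. Second, you omit a channel the paper treats first: each user's locally generated anchor block $X_j^{\rm anc}$ is shared with all other users and, if produced by a generative model trained on $X_j$, could itself leak information; the paper argues this is controlled by the user and that random anchors suffice. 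To repair the proposal, keep your information-enumeration framework but redirect the separation argument at $h$ (and $g_i$): show that the dependence of the fitted $h$ on the honest parties' rows of $\widehat{X}$ cannot be attributed to a single party when at least two honest users remain.
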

\begin{proof}
For a privacy of $X_{i}$ against other users, each user shares the local anchor data $X_{i}^{\rm anc}$ to other users.
Also, each user obtain the functions $g_i$ and $h$ from analyst that is constructed based on information of private data of other users.
\par
First, we consider the possibility of recovering the private data from $X_{i}^{\rm anc}$.
The local anchor data does not contain the original $X_{i}$, but may preserve some useful information.
Here, the local anchor data is constructed by users themselves using e.g., GAN and autoencoder with data augmentation.
Users can control the containing information although it may have a trade-off relationship between the performance.
Note that the collaborative data analysis works well even using random anchor data as demonstrated in \cite{imakura2020data,imakura2020collaborative}.
Therefore, users cannot obtain the private information of $X_{i}$ from $X^{\rm anc}_{i}$.
\par
Next, we consider the possibility of recovering the private data from $g_{i}$ and $h$.
When $c-1$ users $i \neq i'$, where the total number of users is $c$, are malicious and they collude together to retrieve information of the private dataset of the remaining (victim) user $i'$, the colluding users have the function $h$ and all $X_i, f_i, g_i$ $(i \neq i')$.
In this case, the function $g_i$ $(i \neq i')$ and $h$ are constructed by $X_i, f_i$ $(i \neq i')$ of the colluding users and $X_{i'}$ of the victim user.
Therefore, the private data $X_{i'}$ of the victim user will be inferred by solving an inverse problem.
On the other hand, when the $c-2$ users collude, the obtained functions $g_i$ and $h$ of the colluding users are affected by remaining two users with equal importance.
Therefore, users cannot infer each private dataset $X_{i}$ of the victim users.
\par
Thus, an honest-but-curious users cannot infer the private dataset of a particular user against collusion of up to $c-2$ users, that proves the theorem.
\end{proof}
\subsection{Privacy against collision of analyst with users}
\begin{theorem}
  If user(s) and analyst collude in the collaborative data analysis, the privacy of $X_{i}$ is preserved regarding $\varepsilon$-DR privacy definitions \eqref{eq:eDR} and \eqref{eq:worst} of each $f_{i}$.
\end{theorem}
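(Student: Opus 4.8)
The plan is to treat the collusion of analyst with user(s) as the \emph{worst case} for an attacker and to show that, even then, the leakage about $X_i$ is bounded exactly by the intrinsic $\varepsilon$-DR privacy of the user-side map $f_i$. First I would enumerate the information revealed under this scenario: the attacker gains both the intermediate representation $\widetilde{X}_i = f_i(X_i)$ and, crucially, the mapping function $f_i$ itself, together with the downstream functions $g_i$ and $h$. This is precisely the combination that the proof of Theorem~2 identified as necessary to recover $X_i$, so the collusion defeats the \emph{first} privacy layer (the secrecy of $f_i$), and the entire burden falls on the \emph{second} layer, namely the information loss inherent in dimensionality reduction.

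Next I would argue that, with no side information about $X_i$ beyond $f_i$ and $\widetilde{X}_i$, the best any attacker can do is apply some reconstruction function $f_i^\dagger$ to obtain $X_i' = f_i^\dagger(f_i(X_i))$. The key step here is to verify that $g_i$ and $h$ contribute nothing additional: since both act \emph{downstream} of $\widetilde{X}_i$ (that is, on $f_i(X_i)$ rather than on $X_i$ directly), they encode no information about $X_i$ that is not already contained in the pair $(f_i, \widetilde{X}_i)$. Consequently the optimal attack reduces to (approximately) inverting $f_i$, and the reconstruction error is governed by the reconstructability of $f_i$ alone.

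The core of the argument is then immediate from the $\varepsilon$-DR privacy definitions \eqref{eq:eDR} and \eqref{eq:worst}: because $f_i : \mathbb{R}^m \to \mathbb{R}^{\widetilde{m}}$ with $m > \widetilde{m}$ is a genuine dimensionality reduction, it is non-injective and irretrievably discards information, so no $f_i^\dagger$ can reconstruct $X_i$ exactly. In expectation this yields $E[\mathrm{dist}(\bm x, \bm x')] \geq \varepsilon_1 > 0$, and over the sample set it yields $\min_{\bm x \in \mathcal{X}} \mathrm{dist}(\bm x, \bm x') \geq \varepsilon_2$. Since these hold for each $f_i$, the privacy of $X_i$ under full collusion is exactly characterized by the $\varepsilon$-DR parameters $\varepsilon_1, \varepsilon_2$ of that user's own map, which is the claimed statement.

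The main obstacle I anticipate is not the reconstruction-error bound itself but the precise \emph{scope} of the guarantee. As the text notes immediately after \eqref{eq:worst}, $\varepsilon_2$ may be $0$ even when $m > \widetilde{m}$, so for the worst-case definition the theorem cannot promise unconditional protection; it can only assert that the leakage is \emph{relative} to the $\varepsilon$-DR parameters of $f_i$, leaving each user responsible for choosing a map with sufficiently large $\varepsilon_1$ (and, where possible, $\varepsilon_2$). Making the reduction to ``inverting $f_i$'' fully rigorous --- i.e. formally establishing that collusion leaks nothing about $X_i$ beyond $(f_i, \widetilde{X}_i)$, so that the $\varepsilon$-DR lower bound applies to the attacker's \emph{optimal} effort rather than to one particular reconstruction --- is the step that needs the most care.
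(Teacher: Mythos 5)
Your proposal is correct and follows essentially the same route as the paper: collusion lets the attacker obtain $f_i$ (the paper notes this happens concretely via the anchor-data input--output pair $(X^{\rm anc}, \widetilde{X}_i^{\rm anc})$), so the first privacy layer falls and the guarantee reduces to the information loss of the dimensionality reduction, i.e.\ the $\varepsilon$-DR bounds \eqref{eq:eDR} and \eqref{eq:worst}. Your added observations --- that $g_i$ and $h$ act downstream of $\widetilde{X}_i$ and thus leak nothing beyond $(f_i,\widetilde{X}_i)$, and that $\varepsilon_2$ may be $0$ so the worst-case guarantee is only relative to the chosen $f_i$ --- are consistent with (and slightly more careful than) the paper's own argument.
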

\begin{proof}
  If user(s) and analyst collude, then they can obtain both the input $X^{\rm anc}$ and output $\widetilde{X}_{i}^{\rm anc}$ of $f_{i}$.
  In this case, they can infer $f_{i}$ satisfying
  \begin{equation*}
    \widetilde{X}_{i}^{\rm anc} = f_{i}(X^{\rm anc}).
  \end{equation*}
  %
  Therefore, using the inferred $f_{i}$, they can infer $X_{i}$ from $\widetilde{X}_{i}$.
  However, since $f_{i}$ is a dimensionality reduced function, that is $m > \widetilde{m}_{i}$, the privacy is still preserved regarding $\varepsilon$-DR privacy definitions \eqref{eq:eDR} and \eqref{eq:worst} of $f_{i}$.
  In other words, the exact data $X_{i}$ cannot be recovered from $\widetilde{X}_{i}$, even using $f_{i}$.
\end{proof}
\subsection{Privacy against the external attacks}
Using secure data transmission protocols such as Transport Layer Security (TLS), in which the transferred information is encrypted using the private key of the involving parties, the collaborative data analysis also protects the private dataset $X_{i}$ against the man-at-the-middle attackers.
Note that, in this case, we do not use secure multi-party computations, but just use encrypted communication for non private data.
\par
In the case that we do not use secure data transmission protocols, the situation is almost the same as the case that users and analyst collude.
That is, man-at-the-middle attackers can infer $f_{i}$ from $X^{\rm anc}$ and $\widetilde{X}_{i}^{\rm anc}$ and can infer $X_{i}$; however, the privacy is still preserved regarding $\varepsilon$-DR privacy definitions \eqref{eq:eDR} and \eqref{eq:worst} of $f_{i}$.
\subsection{Numerical evaluation for privacy analysis}
Here, we provide numerical evaluation of the worst-case privacy analysis, i.e., the situation of Theorem~4.
Let $B_i \in \mathbb{R}^{m \times m_i}$ be a matrix for dimensionality reduction for $X_i$ as
\begin{equation*}
  \widetilde{X}_i = X_i B_i, \quad B_i \in \mathbb{R}^{m \times m_i}.
\end{equation*}
Let $X_i = [{\bm x}_1^{(i)}, {\bm x}_2^{(i)}, \dots, {\bm x}_{n_i}^{(i)}]^{\rm T}$.
Then, if $B_i$ and the center ${\bm \mu}_i \in \mathbb{R}^m$ of dataset $X_i$ are stolen, $X_i$ is inferred by 
\begin{equation*}
  X'_i 
  = [{\bm x}_1^{(i)'}, {\bm x}_2^{(i)'}, \dots, {\bm x}_{n_i}^{(i)'}]^{\rm T} 
  = \widetilde{X}_i B_i^\dagger + {\bm 1}{\bm \mu}_i^{\rm T}(I - B_i B_i^\dagger),
\end{equation*}
where $B^\dagger$ is the pseudo-inverse of $B$ and ${\bm 1} = [1,1, \dots, 1]^{\rm T}$.
As a $\varepsilon$-DR privacy regarding a sample set \eqref{eq:worst}, we set
\begin{equation}
  \min_{ {\bm x} \in \mathcal{X} } {\rm dist}({\bm x}, {\bm x}') =
  \min_{i,j} \frac{ \| {\bm x}_j^{(i)} - {\bm x}_j^{(i)'} \|_2 }{ \| {\bm x}_j^{(i)} \|_2 }.
  \label{eq:edr}
\end{equation}
Then, we use a down-sampling technique which removes training data samples satisfying \eqref{eq:edr} $< \varepsilon$ by changing $\varepsilon$ and evaluate a trade-off relationship between \eqref{eq:edr} and a prediction accuracy of the collaborative data analysis.
\par
We used MNIST again.
The dimensionality reduction matrix $B_i$ is constructed by PCA using each $X_i$.
We set $c = 10, n_i = 100$ and $m_i = 25$ for parameters.
Other settings of numerical evaluation are the same as used in the numerical evaluation for accuracy analysis.
%
\begin{table}[t]
  \caption{Trade-off relationship between $\varepsilon$-DR privacy and prediction accuracy.}
  \label{table:result}
\begin{center}
\begin{tabular}{cccc} 
\toprule
\multicolumn{1}{c}{down-sampling} & \multicolumn{1}{c}{min} & \multicolumn{1}{c}{Ave.} & \multicolumn{1}{c}{Ave.} \\
\multicolumn{1}{c}{parameter $\varepsilon$} & \multicolumn{1}{c}{$\varepsilon$-DR \eqref{eq:edr} } & \multicolumn{1}{c}{\# of samples} & \multicolumn{1}{c}{ACC} \\ \hline
$0.0   \phantom{001}$  & $7.36 \times 10^{-6}$ & $\phantom{0} 100.00$ & $92.8$  \\
$0.0001             $  & $2.07 \times 10^{-4}$ & $\phantom{00} 99.97 $ & $92.8$  \\
$0.001 \phantom{0}  $  & $1.01 \times 10^{-3}$ & $\phantom{00} 99.75 $ & $92.8$  \\
$0.01  \phantom{00} $  & $1.00 \times 10^{-2}$ & $\phantom{00} 97.50 $ & $92.8$  \\
$0.1   \phantom{000}$  & $1.00 \times 10^{-1}$ & $\phantom{00} 76.85 $ & $91.7$  \\
$0.2   \phantom{000}$  & $2.00 \times 10^{-1}$ & $\phantom{00} 56.43 $ & $90.3$  \\
$0.3   \phantom{000}$  & $3.00 \times 10^{-1}$ & $\phantom{00} 39.35 $ & $88.7$  \\
$0.4   \phantom{000}$  & $4.00 \times 10^{-1}$ & $\phantom{00} 25.84 $ & $86.1$  \\
$0.5   \phantom{000}$  & $5.00 \times 10^{-1}$ & $\phantom{00} 15.98 $ & $81.4$  \\  \hline
\multicolumn{2}{l}{Centralized analysis} & $1000.00$ & $93.6$ \\
\multicolumn{2}{l}{Individual analysis}  & $\phantom{0} 100.00$ & $75.5$ \\
\bottomrule
\end{tabular}
\end{center}
\end{table}
\par
Table~\ref{table:result} shows the trade-off relationship between a minimum $\varepsilon$-DR privacy \eqref{eq:edr}, average number of samples after the down-sampling technique in each party, and average of prediction accuracy (ACC) of 10 trials.
We also show averages of ACC for the centralized and individual analyses.
\par
This result shows that, by the down-sampling technique, we can increase the value of $\varepsilon$-DR privacy \eqref{eq:edr} without loss of ACC; see the case of $\varepsilon = 10^{-2}$.
Also, if the predictive accuracy is allowed to decrease slightly, it can take a larger value of $\varepsilon$-DR privacy \eqref{eq:edr}; see the case of $\varepsilon = 0.2$.
These results mean that a small number of samples significantly reduce the values of $\varepsilon$-DR privacy \eqref{eq:edr}, while these samples do not significantly affect the predictive results.
Additionally, even with a larger $\varepsilon$, e.g., $\varepsilon = 0.5$, the predictive accuracy (ACC) of the collaborative data analysis is still higher than that of the individual analysis.
\subsection{Remarks on privacy analysis}
The collaborative data analysis has the following double privacy layer for protection of the private data $X_{i}$.
\begin{itemize}
  \item No one can have the private data $X_{i}$ because $f_{i}$ is private under the protocol (Theorems~2 and 3).
  \item Even if $f_{i}$ is stolen, the private data $X_{i}$ is still protected regarding $\varepsilon$-DR privacy definitions \eqref{eq:eDR} and \eqref{eq:worst} (Theorem~4).
\end{itemize}
\par
For satisfying $\varepsilon$-DR privacy definitions \eqref{eq:eDR} and \eqref{eq:worst} with certain quantities $\varepsilon_1 > 0$ and $\varepsilon_2 > 0$, we need to pay attention to the construction of $f_{i}$.
Dimensionality reduction method satisfying $\varepsilon$-DR privacy \eqref{eq:eDR} with a given $\varepsilon_1 > 0$ has been proposed in \cite{nguyen2019auto}.
For satisfying $\varepsilon$-DR privacy \eqref{eq:worst} with a given $\varepsilon_2 > 0$, we can use a down-sampling technique, which removes data samples satisfying ${\rm dist}({\bm x}, {\bm x}') < \varepsilon_2$, or a constrained dimensionality reduction method, which adds \eqref{eq:worst} as a constrain in optimization.
\par
We also observed from our numerical evaluation that the down-sampling technique can take a larger value of $\varepsilon$-DR privacy \eqref{eq:edr} with small decreasing of ACC.
\par
Here, we also shortly discuss the privacy against the membership inference attack.
The membership inference attacks involve constructing multiple reference datasets and observing the change in the output of the constructed model according to the presence or absence of the target data samples.
Therefore, they are only feasible in scenarios when individual dimensionality reduction functions $f_i$ are leaked in the collaborative data analysis.
To secure data collaboration from the collusion of users in applications where membership inference is a concern, specialized dimensionality reduction algorithms providing Differential Privacy, such as in \cite{jiang2013differential}, can be also considered. 

%
\section{Conclusions}
\label{sec:conclusion}
%
In this paper, we analyzed the accuracy and privacy of a non-model sharing-type federated learning, so-called collaborative data analysis.
\par
From the accuracy analysis, we provided the sufficient condition \eqref{eq:cond} for equivalence of the collaborative data analysis and the centralized analysis with dimensionality reduction.
We also provided a criteria $\tau_1$ for evaluating accuracy of the collaborative data analysis and numerically evaluated them.
\par
From the privacy analysis, we proved that, in the collaborative data analysis, the privacy of the private dataset is preserved based on a double secureness against insider and external attacking scenarios.
We also evaluated the trade-off relationship of privacy and accuracy and showed that the down-sampling technique can take a larger value of $\varepsilon$-DR privacy \eqref{eq:edr} with small decreasing of prediction accuracy.
\par
In the future, we will further analyze the accuracy and privacy of the collaborative data analysis for more complicated situations, e.g, usage of nonlinear dimensionality reduction function and the case of vertical and horizontal data distribution, with numerical evaluation in real-world problems.
\section{Acknowledgements}
This work was supported in part by the New Energy and Industrial Technology Development Organization (NEDO).
\bibliographystyle{elsart-num-sort}
\bibliography{aaai,mybibfile}
\end{document}